\documentclass{article}
\usepackage{enumitem}
\usepackage{wrapfig}
\usepackage[preprint]{corl_2026} %
\usepackage{times}
\usepackage{graphicx}
\usepackage{amsthm}
\usepackage[table,xcdraw,dvipsnames]{xcolor}
\usepackage{booktabs}
\usepackage[numbers]{natbib}
\usepackage{multicol}

\newcommand{\Sa}{\ensuremath{\mathcal{S}}}

\newcommand{\Aa}{\ensuremath{\mathcal{A}}}

\usepackage{amsmath,amsfonts,bm}

\def\Figref#1{Figure~\ref{#1}}

\def\Secref#1{Section~\ref{#1}}

\def\eqref#1{equation~\ref{#1}}
\def\Eqref#1{Equation~\ref{#1}}

\def\1{\bm{1}}

\DeclareMathAlphabet{\mathsfit}{\encodingdefault}{\sfdefault}{m}{sl}
\SetMathAlphabet{\mathsfit}{bold}{\encodingdefault}{\sfdefault}{bx}{n}

\DeclareMathOperator*{\argmax}{arg\,max}
\DeclareMathOperator*{\argmin}{arg\,min}

\newcommand{\methodname}{\emph{Deliberate Practice}}

\newtheorem{theorem}{Theorem}

\newtheorem{lemma}{Lemma}
\title{Deliberate Practice:\\ Learning Robot Skills under a Budget}

\author{
   Shivam Vats$^{1}$  \quad
   Sudarshan Harithas$^{1}$ \quad
   Mete Tuluhan Akbulut$^{1}$ \\
   \textbf{Arvind Raghunathan}$^{2}$ \quad
   \textbf{George Konidaris}$^{1}$ \\
  $^{1}$Brown University \quad
  $^{2}$Mitsubishi Electric Research Laboratories \\
  Correspondence to \texttt{shivam\_vats@brown.edu}\\
}

\begin{document}
\maketitle

\begin{abstract}
We consider the problem of autonomously learning robot skills under a limited practice budget for sequential tasks. We propose an active skill learning algorithm, \emph{Deliberate Practice (DP)}, that computes a provably \emph{budget-optimal} allocation---practicing skills that maximize expected cumulative reward while being learnable within the budget. DP estimates both the time needed to master skills and the cumulative reward of the task plans that the skills unlock. Computing a budget-optimal allocation is challenging as it requires reasoning about combinatorially many skill plans over a large practice budget. Our key contribution is a bilinear program that can compute this exactly using off-the-shelf solvers. Through simulated and real-world experiments on long-horizon manipulation tasks, we show that our approach allows robots to optimally use limited practice time to acquire useful policies and improve long-horizon planning.
\end{abstract}

\keywords{Task and Motion Planning, Active Learning, Robot Skills} 

\section{Introduction}

Recent progress in large-scale robot learning has produced general-purpose robot policies that can perform diverse skills and transfer to novel environments~\cite{pastor2011skill,zhu2023learning,chi2025diffusion}.
When combined with classical planning methods, such as Task and Motion Planning (TAMP)~\cite{cambon2009hybrid,garrett2021integrated,hedegaard2025beyond} and search~\cite{liang2022search,mishani2025mosaic}, these policies can allow robots to perform long-horizon tasks in homes and factories, charting a path for more autonomous robots.
However, pretraining alone does not cover all the possible scenarios a robot may face in the real-world, often leading to unreliable task execution.
Therefore, a promising solution is to allow robots to learn from experience through autonomous practice~\cite{kober2013reinforcement,deisenroth2013survey,gu2017deep}.
Unfortunately, current RL algorithms remain highly sample inefficient~\cite{mnih2015human,kalashnikov2018scalable}, limiting their applicability to deployment-time learning. Since robots typically have only limited downtime during deployment, they require methods explicitly designed for budgeted practice.

Our key insight is that downtime is often known in advance, and robots should leverage this information to adapt their learning process to the resulting {practice budget}. Intuitively, a large practice budget should encourage robots to practice more rewarding skills even if they are hard, while a small budget should lead to conservative learning behaviour.
We formalize the problem of autonomous skill practice for sequential tasks as budgeted skill learning, and propose an active learning algorithm, \methodname{} (DP), that optimally uses the budget to maximize task reward.
We consider the standard TAMP setting in which a robot is given high-level skill specifications, including preconditions, termination conditions, and effects. For example, in the  breakfast domain in \Secref{sec:experiments}, the \texttt{StartToaster} skill requires moving above the toaster lever to press it and  has the effect of turning on the toaster.
The robot must learn parameterized control policies that ground skills into low-level actions through practice in the environment.
At decision time, a task planner computes a plan consisting of skills, e.g., \texttt{(Pick(bread), Place(bread, toaster), Start(toaster))} to toast bread.
Figure~\ref{fig:teaser} illustrates how our approach allows the robot to decide which skills to practice to make breakfast. The robot can either practice one skill to toast bread or use additional practice time to learn two skills required to microwave oatmeal and achieve a higher reward.
As shown in \Secref{sec:experiments}, with a budget of $30$ episodes, DP correctly estimates that only one skill can be learned reliably and hence practices to toast bread.
Under a budget of $60$ episodes, it instead practices to microwave oatmeal to take advantage of the additional practice time.

\begin{figure}[t]
    \centering
    \includegraphics[width=1\linewidth]{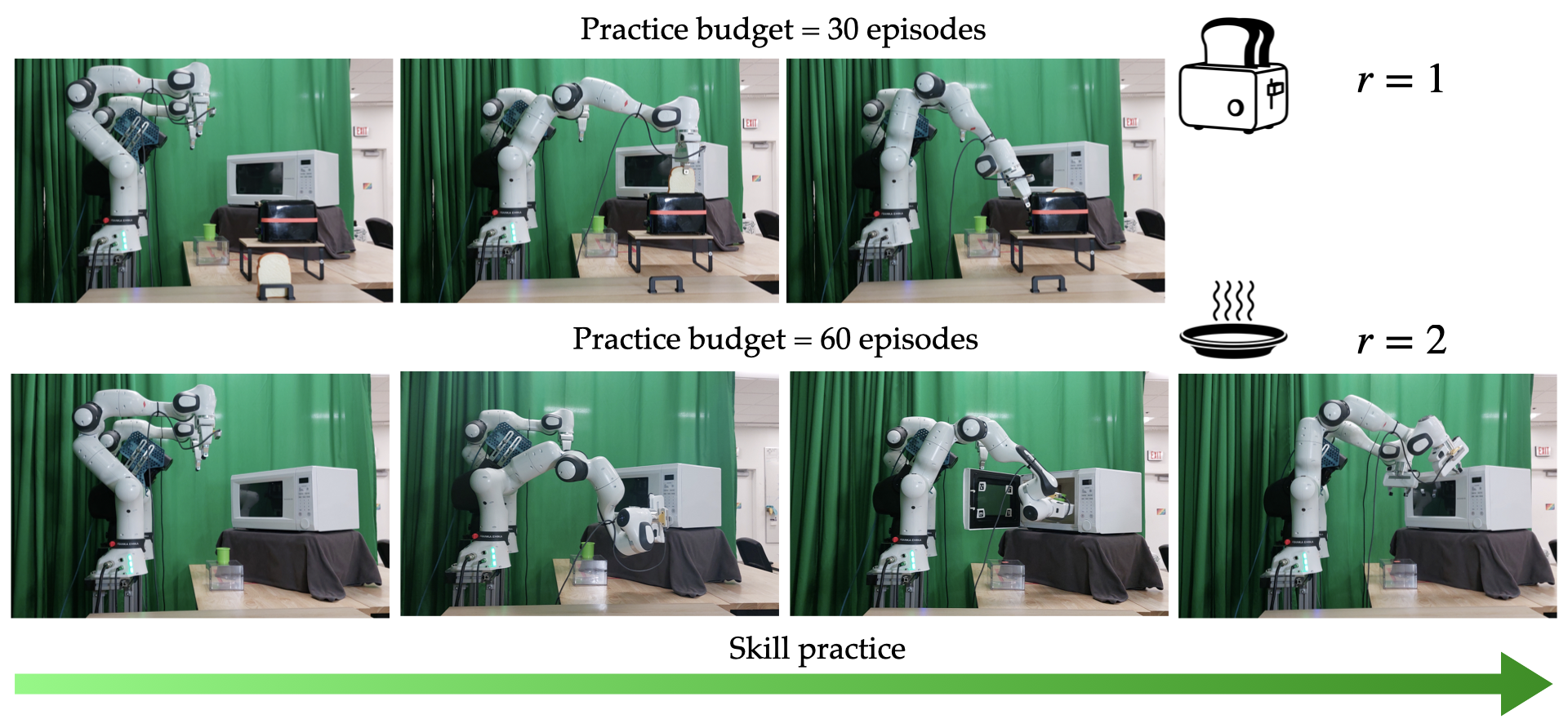}
    \caption{\methodname{} allows robots to intelligently use a limited practice budget to actively learn skills to maximize long-horizon task reward. \emph{(top)} Under a small budget, the robot practices an easy task plan that toasts bread and provides reward $1$.
    \emph{(bottom)}  Under a larger budget, it practices a more difficult but higher-reward task plan that microwaves oatmeal and provides reward $2$.
    }
    \label{fig:teaser}
\end{figure}

\methodname{} first estimates the difficulty of learning each skill as a function of the practice budget.
It then computes a {budget allocation} that provably maximizes expected planning performance, and practices its skills accordingly.
We show in \Secref{sec:formulation} that computing an optimal allocation requires solving a challenging bilevel optimization problem.
Existing active learning algorithms approximate this problem greedily, leading to myopic and suboptimal learning~(\Secref{sec:greedy}).
By contrast, we derive an exact single-level reformulation of the bilevel problem  using the linear programming (LP) formulation of Markov decision processes. This allows us to compute the optimal budget allocation using off-the-shelf solvers.
Through experiments in simulated and real-world long-horizon manipulation tasks, we demonstrate that \methodname{} is uniquely budget-aware and enables robots to learn effectively under deployment constraints.

\section{Related Work}

\textbf{Active Skill Learning.}
Deliberate practice is a widely studied mechanism underlying expert human performance~\cite{anders2008deliberate}, characterized by structured, goal-directed training that allocates effort toward the most limiting components of a skill.
In robotics, active learning has similarly been central for skill acquisition due to the high cost of data collection. Prior work includes methods for actively learning parameterized robot skills from practice~\citep{da2014active}, human demonstrations~\citep{chernova2009interactive}, and across multiple tasks~\citep{fabisch2014active,vats2022synergistic,vats2025coil}.
However, these approaches focus on efficiently acquiring individual skills without considering the sequential nature of long-horizon problems.
By contrast, our approach studies {active skill learning embedded in a complex sequential decision process under explicit budget constraints}.

\textbf{Task and Motion Planning (TAMP).}
TAMP~\citep{cambon2009hybrid,srivastava2014combined,garrett2021integrated} addresses robotic planning problems that require joint reasoning over discrete task structure and continuous motion feasibility.
Most TAMP systems adopt a hierarchical approach, where a high-level planner~\citep{fox2003pddl2,hauser2010task} solves an abstract version of the problem to compute a task plan, which is then grounded into low-level robot actions using motion planning and control~\citep{garrett2020pddlstream}.
Recent TAMP approaches leverage machine learning to improve scalability and performance, for example, by learning heuristics to guide symbolic search \citep{chitnis2016guided}, or by learning symbolic operators and task representations~\citep{silver2021learning,shah2024hierarchical}.
Our work is closely related to approaches that \emph{plan to practice} skills for autonomous improvement in long-horizon tasks~\citep{vats2023efficient,kumar2024practice}. However, these methods rely on greedy practice strategies and often allocate practice time suboptimally. By contrast, our method computes a budget-optimal allocation of practice time.

\section{Background}
We model the environment as a Markov decision process (MDP) with continuous state and action spaces, defined by the tuple $\mathcal{M} :=  (\mathcal{S}, \mathcal{A}, P, r, \gamma)$, where $\mathcal{S}$ is the low-level state space, $\mathcal{A}$ is the low-level robot action space, $P(s'|s,a)$ is the transition function, $r(s,a)$ is the reward function, $\gamma$ is the discount factor.
We assume an abstract MDP $\mathcal{M}_h :=  (\mathcal{S}_h, \mathcal{A}_h, P_h, r_h, \gamma)$, where a state abstraction function typically $\alpha: \mathcal{S} \rightarrow \mathcal{S}_h$ projects raw sensory observations to abstract states.

\textbf{Abstract Dynamics.} We assume that the robot is provided with a library of skills $\mathcal{U}$, which are defined using an extension of the options framework, consisting of the following components:
a) \emph{Precondition} (\textsc{Pre}) defines the states in which a skill can be initiated,
b)  \emph{Effect} (\textsc{Eff}) describes the outcome of executing a skill successfully,
c)  \emph{Control policy} ($\pi$) computes the low-level actions to be  executed by the robot,
d) \emph{Termination condition} ($\beta$) specifies when a skill should terminate,
e) {Competence} ($p$) is the probability that the skill will successfully achieve the desired effect.
The abstract action space $\mathcal{A}_h$ is defined by skill library $\mathcal{U}$ that the robot can use to transition between abstract states.
Skill effect and competence together define the high-level transition function $P_h$: $P_h(s'|s,a) = p$, if $s'$ corresponds to the effect of skill $u \in \mathcal{U}$ when executed in state $s$, otherwise transitioning to an absorbing failure state.
We use a task planner to solve this abstract MDP $\mathcal{M}_h$ to compute a sequence of skills with the correct parameters to reach the goal, e.g., \texttt{(Open(top-drawer), Pick(toy), Place(toy, top-drawer))}.
The skills then ground this abstract task plan by computing feasible robot trajectories.

\textbf{Structured Robot Skill.} We use a combination of motion planning and parameterized policies to handle contact-rich interactions with the environment. We structure every contact-rich policy as an object-centric Composable Interaction Primitive (CIP)~\citep{abbatematteo2024composable}, which consists of three phases:
a) \emph{Pre-interaction:} motion plan to a pose near the object, b) \emph{Interaction:} execute a parameterized policy, and c) \emph{Post-interaction:} motion plan away from the object.
The interaction policy is learned by the robot by practicing in the environment.

\textbf{Task and Skill Reward.} The robot is tasked with maximizing a task reward function $R$ with one or more absorbing goal states, defined by a goal function $g: \mathcal{S} \rightarrow \{0, 1\}$.
We also define skill-specific reward functions to learn the policy parameters through practice.

\section{Budgeted Skill Learning for Planning}
\label{sec:formulation}

Given the high-level descriptions of a library of skills $\mathcal{U}$, consisting of their preconditions, termination conditions, and effects, the robot must learn the policy parameters of its skills to improve overall planning performance under a limited practice budget $B$, e.g., maximum number of trials.
We adopt the standard episodic reinforcement learning setting without access to expert demonstrations.
Our goal is to compute a budget allocation $b_{u}$ across all skills $\mathcal{U}$ to maximize the expected planning performance. Formally, this can be expressed as the following optimization problem:
\begin{equation}
\begin{split}
    &\max_b \sum_{s\in\Sa} e_s v_s \\
    &\text{subject to} \\
    &\quad \sum_{u \in \mathcal{U}} b_{u} \leq B, \\
      &\quad v_s \in \text{SolveMDP}(\mathcal{M}_h, b),
\end{split}
\label{eq:bilevel_obj}
\end{equation}

where $e_s$ is the initial state distribution, and $v_s$ is the state value function. This is a bilevel optimization problem where the outer problem computes a budget allocation $b$ for practicing policies, and the inner problem solves the resulting task MDP $\mathcal{M}_h$ to compute the expected planning performance after practicing with budget allocation $b$.

\subsection{Greedy Active Learning Converges to Local Minima}
\label{sec:greedy}

\begin{wrapfigure}{r}{0.35\columnwidth} %
    \centering
    \includegraphics[width=\linewidth]{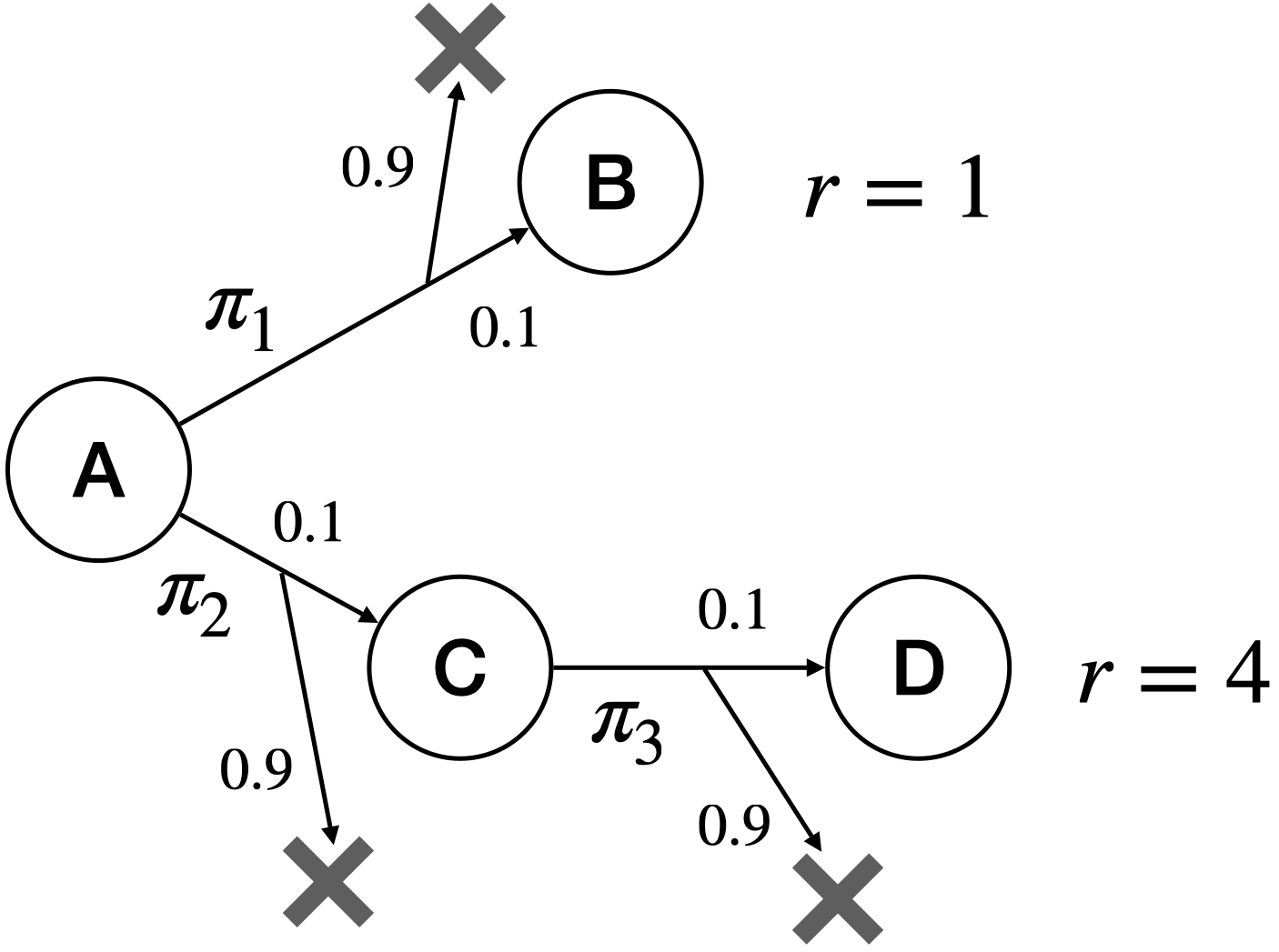}
    \caption{An example illustrating how greedy active learning can make suboptimal decisions.}
    \label{fig:greedy_counter}
\end{wrapfigure}
Prior active skill learning methods~\citep{vats2023efficient,kumar2024practice} for sequential tasks greedily practice the skill that will most improve the expected task performance $J_{\text{task}}$ after one round of training.
These methods can be viewed as performing one-step reasoning to solve \eqref{eq:bilevel_obj}.
However, this is inherently myopic: it can overlook higher-performing task plans that require deliberate practice over multiple episodes to be mastered.
We provide an illustrative example in Figure~\ref{fig:greedy_counter}.
This MDP consists of $5$ states: A, B, C, D and a terminal failure state. The robot always starts at A, while B and D are terminal goal states. The MDP can be solved either by using $\pi_1$ to reach B or by using $\pi_2$ and $\pi_3$ to reach D.
All skills have identical competence $p = 0.1$; they transition to the failure state with probability $0.9$.
The robot must decide which skills to practice among $\pi_1, \pi_2, \pi_3$ under a budget of 20 episodes given each skill has the same rate of improvement $\Delta p_i = 0.1$ per practice episode.
The optimal policy for the MDP before practice is to use $\pi_1$ to reach B. Hence, $J_{\text{task}} =  p_1 \times R_B = 0.1$.
Let $\pi_i'$ be the skill after a round of practice. The greedy task improvement measure of each skill is computed below:
\begin{align*}
& \text{Practice $\pi_1$} \implies J_{\text{task}}(\{\pi_1', \pi_2, \pi_3\})  =  (p_1 + \Delta p_1) \times R_B  = 0.2 \times 1  = 0.2 \\
& \text{Practice $\pi_2$} \implies J_{\text{task}}(\{\pi_1, \pi_2', \pi_3\}) =  (p_2 + \Delta p_2) \times p_3 \times R_D  = 0.2 \times 0.1 \times 4  = 0.08 \\
& \text{Practice $\pi_3$} \implies J_{\text{task}}(\{\pi_1, \pi_2, \pi_3'\})  =  p_2 \times (p_3 + \Delta p_3) \times R_D  = 0.1 \times 0.2 \times 4  = 0.08.
\end{align*}

Therefore, greedy methods practice only $\pi_1$ since it always maximizes one-step $J_{\text{task}}$ improvement resulting in the final reward $1$.
However, this misses the optimal solution $\{\pi_2, \pi_3\}$ which will provide reward $4$ under $20$ rounds of practice, because identifying the optimal budget allocation requires reasoning about deliberate practice across $\pi_2$ and $\pi_3$ over multiple rounds.

\section{Deliberate Practice}

Our algorithm allocates the practice budget provably optimally by jointly reasoning about all robot skills across the entire practice budget.
As shown in \Figref{fig:overview}, our approach comprises three steps:
\begin{enumerate}
    \item \textbf{Competence Prediction.} Competence improvement is modeled as a function of the practice budget. This model is initialized with a domain-specific prior and updated online based on the robot's actual improvement.
    \item \textbf{Budget Allocation.} A budget-optimal task plan $\Pi^*$ is computed along with the corresponding budget allocation $b^*$ needed to master it.
    \item \textbf{Skill Practice.} The robot sequentially masters the skills that are reachable from the start and then uses them to plan to reach and practice downstream skills.
\end{enumerate}

\begin{figure}[t]
    \centering
    \includegraphics[width=1\linewidth]{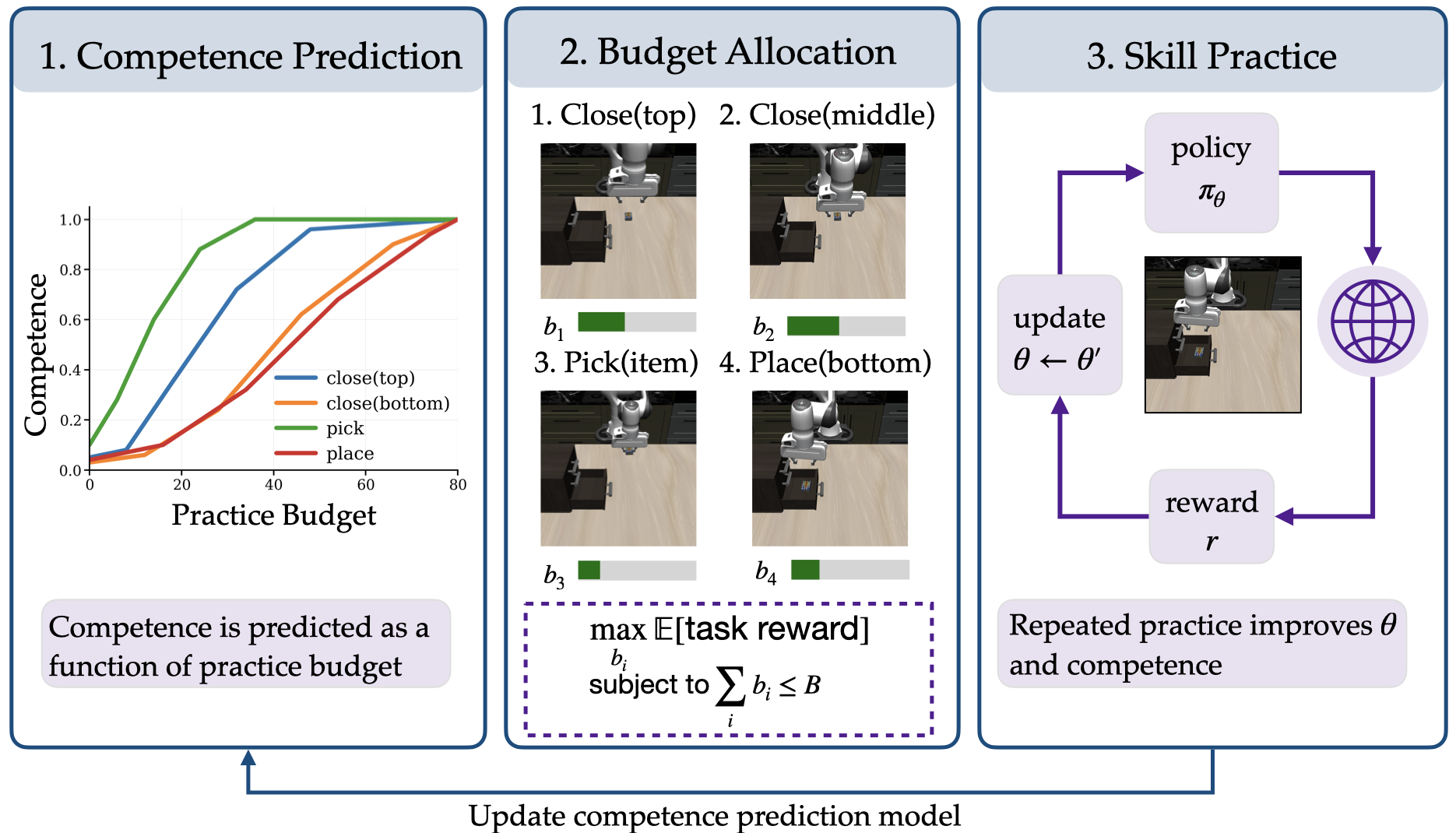}
    \caption{\textbf{Overview:} Our approach predicts skill competence as a function of practice budget, computes a budget allocation for practice, and practices skills by interacting with the environment.}
    \label{fig:overview}
\end{figure}

\subsection{Competence Prediction}
The competence prediction function $f_{\text{improv}}(p_u, b_u)$ predicts the expected skill competence after practicing a skill with a competence of $p_u$ for $b_u$ rounds.
While our approach supports linear, piece-wise linear and exponential models of competence prediction,  empirically a piece-wise linear model provides a good balance between performance and efficiency.
\begin{equation}
    f_{\text{improv}}(u, b) = \min(1, p_{u} + \Delta_{u}b), \label{eq:competence_improvement_pwl}
\end{equation}
where $\Delta_{u}$ is the rate of improvement of a skill $u$ estimated from practice. $\Delta_{u}$ is estimated online as a running average of competence improvement observed after practicing skill $u$ over time: $\Delta_u^t = \epsilon\Delta_u^{t-1} + (1 - \epsilon)(p_u^t - p_u^{t-1})$, where $\epsilon \in [0,1]$ is a smoothing factor. To capture highly non-linear dynamics (such as deep reinforcement learning), our approach also supports more complex models, such as the saturating exponential: $f_{\text{improv}}(u, b) = p_u + (1-p_u)(1 - e^{-\alpha b})$, where $\alpha$ is the learning rate that is estimated online.

\subsection{Budget Allocation}
Next, we use the competence prediction model to compute the best task plan feasible within the practice budget.
Long-horizon manipulation tasks require deliberately practicing multiple skills over many episodes to be reliably solved. As the bilevel optimization problem (\Eqref{eq:bilevel_obj}) shows, this requires jointly reasoning about all skills and the entire practice budget.
However, this optimization is notoriously difficult due to its nested structure: the feasible set of the outer problem is defined implicitly by the solution of the inner optimization. This coupling leads to highly nonconvex and often non-smooth objectives, even when both levels are individually convex.
We derive an \textbf{exact single-level reformulation} using duality, yielding a structured single-level bilinear program that enables efficient optimization using standard nonlinear programming solvers, eliminating the need for nested optimization, and greedy approximations.
To our knowledge, this is the \emph{first exact formulation of robot learning on a budget as an optimization problem}.

We first state the bilinear program and then outline the derivation. The lines in black in \eqref{eq:single_level} correspond to the standard dual linear program of an MDP~\cite{puterman2014markov}, and our proposed changes are highlighted in {\color{NavyBlue}{blue}}.
$\mu_s^a$ are dual variables that correspond to state-action occupancy, $e_s$ is the initial state distribution, $P_{s's}^a$ is the current transition function, and $\bar{P}_{s's}^a(b)$ is the predicted transition function after practicing with budget allocation $b$.
Surprisingly, we show that practice budget constraints can be directly incorporated into the standard dual linear program.
\begin{equation} 
\begin{split}
&\max_{b, \mu} \sum_{\forall s\in\Sa} \sum_{\forall a \in \Aa} r_s^a\mu_s^a \\
&\text{subject to}\\
&\quad\sum_{\forall a \in \Aa} \mu_s^a - \gamma \sum_{s'} \sum_{\forall a } \bar{P}_{s's}^a(b) \mu_{s'}^a = e_s, \forall s \\
&\quad {\color{NavyBlue}\bar{P}_{s's}^a(b) = f_{\text{improv}}({P}_{s's}^a, b) } \\
&\quad{\color{NavyBlue}\sum{b_{ij}} \leq B} \\
&\quad{\color{NavyBlue}\mu_s^a \leq \frac{1}{1 - \gamma}, \forall s, a}.
\end{split}
\label{eq:single_level}
\end{equation}

This is a \emph{nonsmooth bilinear mathematical program} because of the piece-wise linear model in~\eqref{eq:competence_improvement_pwl} and bilinear terms $\bar{P}_{s's}^a(b) \mu_{s'}^a$ in the constraints.
While being a nonsmooth, nonconvex nonlinear problem, there exist powerful techniques to solve it to global optimality. In particular, \emph{piecewise McCormick envelopes} can be used to develop convex relaxations of the bilinear constraints and are supported by popular solvers, such as Gurobi~\citep{gurobi}.

\begin{theorem}
Deliberate Practice is budget-optimal, i.e, it computes a globally optimal allocation of the practice budget.
\end{theorem}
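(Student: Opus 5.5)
The plan is to show that the single-level program in \Eqref{eq:single_level} is an exact reformulation of the bilevel problem \Eqref{eq:bilevel_obj}. Concretely, I would prove that the two problems have the same optimal value and that any optimal $b$ of one is optimal for the other. Budget-optimality of Deliberate Practice then follows, because it returns a global optimizer of \Eqref{eq:single_level}. The paper points to spatial branch-and-bound with piecewise McCormick envelopes, as in Gurobi, for this step, and those methods certify global optimality for bilinear programs over bounded variables.

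\textbf{Step 1: fix $b$ and apply MDP duality.} For any fixed allocation $b$ with $\sum b_u\le B$, the transition kernel $\bar P(b)=f_{\text{improv}}(P,b)$ is a fixed, valid (sub)stochastic kernel; the clipping at $1$ in \Eqref{eq:competence_improvement_pwl} guarantees this, with the residual mass going to the failure state. The inner problem $\text{SolveMDP}(\mathcal{M}_h,b)$ is then the primal LP $\min_v \sum_s e_s v_s$ subject to $v_s\ge r_s^a+\gamma\sum_{s'}\bar P^a_{ss'}(b)v_{s'}$. Its dual is exactly the black part of \Eqref{eq:single_level} with $b$ held fixed. Since $\gamma<1$ and rewards are bounded, both LPs are feasible and bounded, so strong duality gives $\max_\mu \sum r_s^a\mu_s^a=\sum_s e_s v_s^*(b)$, where $v^*(b)$ is the optimal value function. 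This identifies the optimal inner value with the dual objective for every feasible $b$.

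\textbf{Step 2: the added bound on $\mu$ is non-restrictive.} Any feasible $\mu\ge 0$ satisfying the flow constraints has total mass $\sum_{s,a}\mu_s^a=\sum_s e_s/(1-\gamma)=1/(1-\gamma)$. This follows by summing the flow constraints over $s$ and using that $\bar P$ is stochastic once the failure state is included. Hence each $\mu_s^a\le 1/(1-\gamma)$ automatically, so the extra bound removes no feasible points. Its role is to make the feasible set compact, which the McCormick and branch-and-bound guarantees need.

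\textbf{Step 3: collapse the max–max.} Jointly maximizing over $(b,\mu)$ gives
\[
\max_{b\in\mathcal{B}}\max_{\mu\in\mathcal{F}(b)} r^\top\mu=\max_{b\in\mathcal{B}} \sum_s e_s v_s^*(b),
\]
where $\mathcal{F}(b)$ is the set of occupancy measures feasible for allocation $b$. The right-hand side is the bilevel objective. For the optimizer correspondence: if $(b^*,\mu^*)$ is globally optimal for \Eqref{eq:single_level}, then $\mu^*$ is optimal for the inner dual at $b^*$, and $b^*$ attains the bilevel optimum. Conversely, any bilevel optimizer paired with its optimal occupancy measure is single-level optimal. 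Existence of an optimum follows from compactness: $b$ ranges over a box intersected with a simplex, $\mu$ is bounded by Step 2, all constraints are continuous, and the objective is linear.

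\textbf{Main obstacle.} I expect the hardest step to be making the globality claim rigorous rather than the duality argument. Three points need care. First, the nonsmooth $\min(1,\cdot)$ has to be encoded exactly, for example with an auxiliary binary variable or via the piecewise structure, so that the solver's relaxation converges to the true feasible set. Second, all variables must be bounded: $b\in[0,B]$, $\mu\in[0,1/(1-\gamma)]$, and $\bar P\in[0,1]$. Third, one has to invoke the standard convergence result for spatial branch-and-bound under McCormick relaxations, which gives a global optimum up to a solver tolerance $\epsilon$. Accordingly, I would state the theorem's conclusion as optimality relative to the competence model $f_{\text{improv}}$, since optimality is only defined with respect to the predicted improvement.
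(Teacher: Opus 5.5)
Your proposal is correct and follows the paper's proof. You fix $b$, replace the follower LP by its dual via strong duality (using that leader and follower share the objective $e^\top v$), collapse the resulting $\max$--$\max$ into the single-level bilinear program, and bound $\bar P\in[0,1]$ and $\mu\in[0,1/(1-\gamma)]$ so that spatial branch-and-bound with McCormick relaxations certifies global optimality. Your details are somewhat tighter than the paper's: you get strong duality from feasibility of the primal and dual LPs rather than from the Slater lemma, whose point $\hat v_s=\alpha/\gamma^2$ is not strictly feasible for $\gamma$ close to $1$ when $\alpha>0$; you show the $\mu$ bound is redundant by summing the flow constraints; and you make $b\ge 0$, $\mu\ge 0$, existence of an optimum, and the exact encoding of $\min(1,\cdot)$ explicit.
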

\begin{proof}
First, we show that the bilinear program in \Eqref{eq:single_level} is an exact reformulation of the bilevel program in \Eqref{eq:bilevel_obj}.
Our key idea is to replace the inner optimization with its linear programming formulation. This allows us to convert the bilevel problem into a $\max \min$ problem. Then, we utilize LP duality to reformulate it as a single-level $\max$ problem with bilinear constraints.
We solve~\eqref{eq:single_level} to global optimality using spatial branch and bound methods~\citep{tawarmalani2013convexification}, which handle bilinear constraints via convex relaxation techniques, such as Piecewise McCormick Envelopes.
Constructing these relaxations requires valid finite bounds on all the variables appearing in the bilinear constraints.
We derive tight lower and upper bounds for all variables $\mu$ and $P$ appearing in bilinear constraints, enabling the certification of global optimality.
In particular, $P_{ss'}^a \leq1$ since it corresponds to transition probability, and $\mu_s^a \leq \frac{1}{1-\gamma}$ since it is the discounted state-action occupancy measure.
A detailed proof is in Appendix~\ref{sec:single_level_formulation}.
\end{proof}

\subsection{Skill Practice}
Once the robot computes a budget allocation and a corresponding budget-optimal task plan, it constructs a curriculum to practice the selected skills.
This requires the robot to plan with its existing skills to first reach the precondition of the skill being practiced~\citep{kumar2024practice}.
Specifically, the robot sequentially masters each of the skills on the budget-optimal plan computed by \methodname{} and uses each newly acquired skill to reach and practice subsequent skills.

\section{Experimental Evaluation}
\label{sec:experiments}

We evaluate our method in three long-horizon table-top manipulation tasks. Our simulated environments are implemented in MuJoCo~\citep{todorov2012mujoco} using LIBERO~\citep{liu2023libero}:
\begin{enumerate}
    \item \emph{Cleanup (simulated).} A Franka Panda robot must clear a table by placing an object into one of three drawers. The task MDP has $47$ abstract states and $10$ skills in total, with the longest task plan consisting of $4$ skills. The robot is provided abstract specifications of \texttt{Pick}, \texttt{Place}, \texttt{OpenDrawer}, and \texttt{CloseDrawer} skills.
    \item \emph{Cleanup-Multi (simulated).} This is a challenging multi-object version of the \emph{Cleanup} task in which four objects must be placed into a drawer. The task MDP has $5000$ abstract states and $22$ skills in total, with the longest task plan consisting of $10$ skills.
    \item \emph{Breakfast (real-robot).} A Franka Panda robot can toast bread or make hot oatmeal using a microwave to achieve a higher reward. 
    The former requires learning a \texttt{StartToaster} skill, while the latter requires learning \texttt{OpenMicrowave} and \texttt{CloseMicrowave} skills, and hence is harder. These skills require forceful interaction with novel articulated objects and hence must be learned by practicing. The robot has additional pick and place skills for transporting objects using motion planning that need not be learned.
\end{enumerate}

\begin{figure}[t]
    \centering
    \includegraphics[width=1\linewidth]{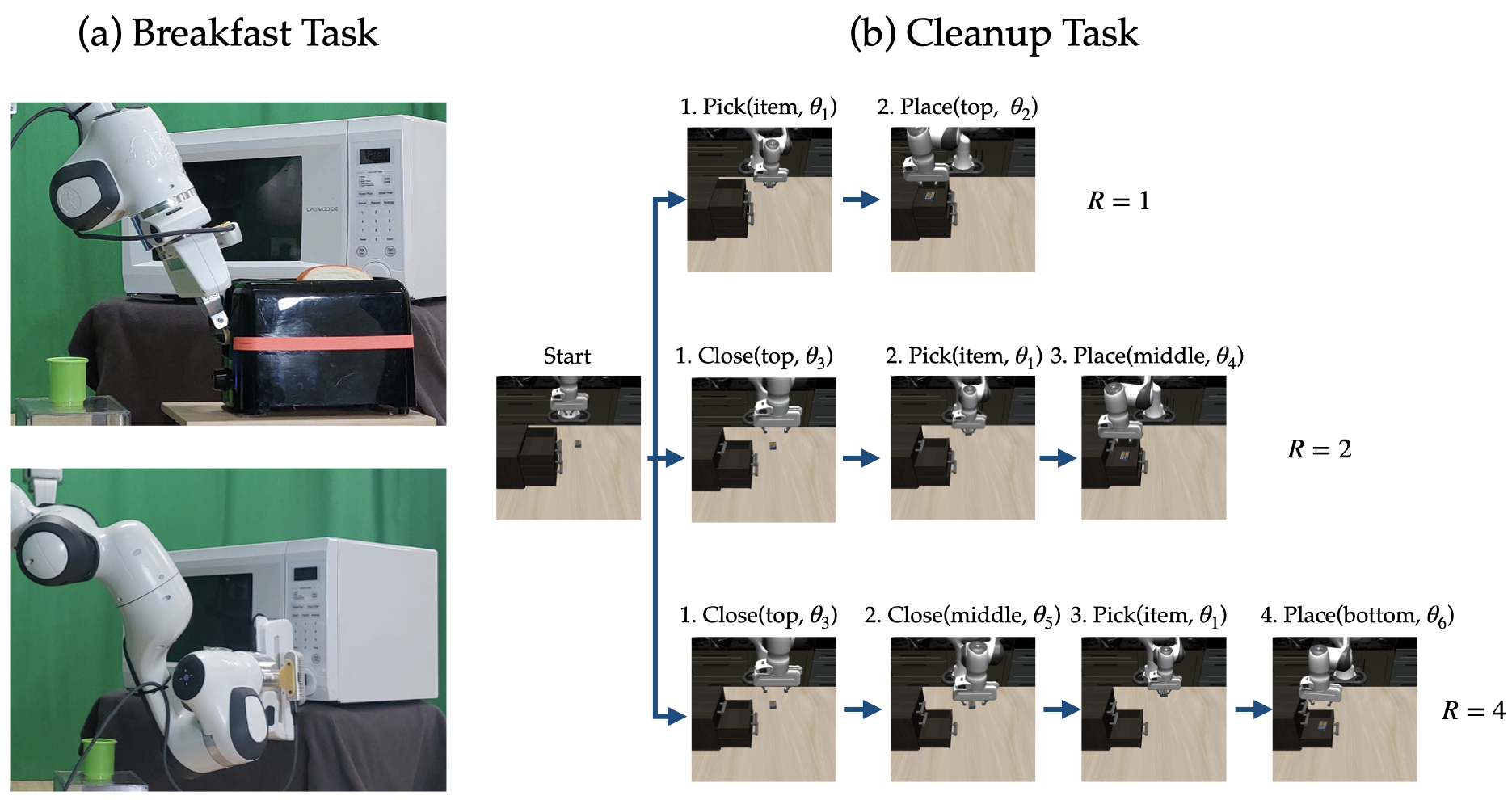}
    \caption{(a) \emph{Breakfast task} requires learning forceful manipulation skills to interact with novel articulated objects.  (b) \emph{Cleanup task} requires the robot to place objects in the {top}, {middle}, or {bottom} drawer with rewards $1$, $2$ and $4$ respectively.
    }
    \label{fig:sim_cleanup}
\end{figure}

\textbf{Baselines.} We compare \emph{Deliberate Practice (DP)} against the following baselines:
    (1) \emph{Estimate, Extrapolate, Situate (EES)} estimates the expected improvement of each skill and greedily practices the skill that most improves task performance~\citep{kumar2024practice}.
    (2) \emph{Competence Improvement (CI)} practices the skill with the highest expected competence improvement~\citep{stout2010competence,colas2019curious}.
    (3) \emph{Least Competent First (LCF)} practices the skill with the lowest estimated competence.
    (4) \emph{Random (R)} randomly samples a reachable skill for practice.
Implementation details are provided in the Appendix.

\textbf{Skill Practice.} We implement all learnable skills as Cartesian-space impedance controllers~\cite{hogan1985impedance} parameterized by waypoints and impedance parameters.
For each skill, we define a reward function and optimize the corresponding interaction policy parameters using CMA-ES~\cite{hansen2001completely}. We use CMA-ES because our objective is evaluated through rollouts and is not differentiable with respect to the policy parameters. CMA-ES is a robust gradient-free optimizer for such continuous black-box objectives.

\subsection{Experimental Results}

\begin{figure*}[t]
    \centering
    \includegraphics[width=1\linewidth]{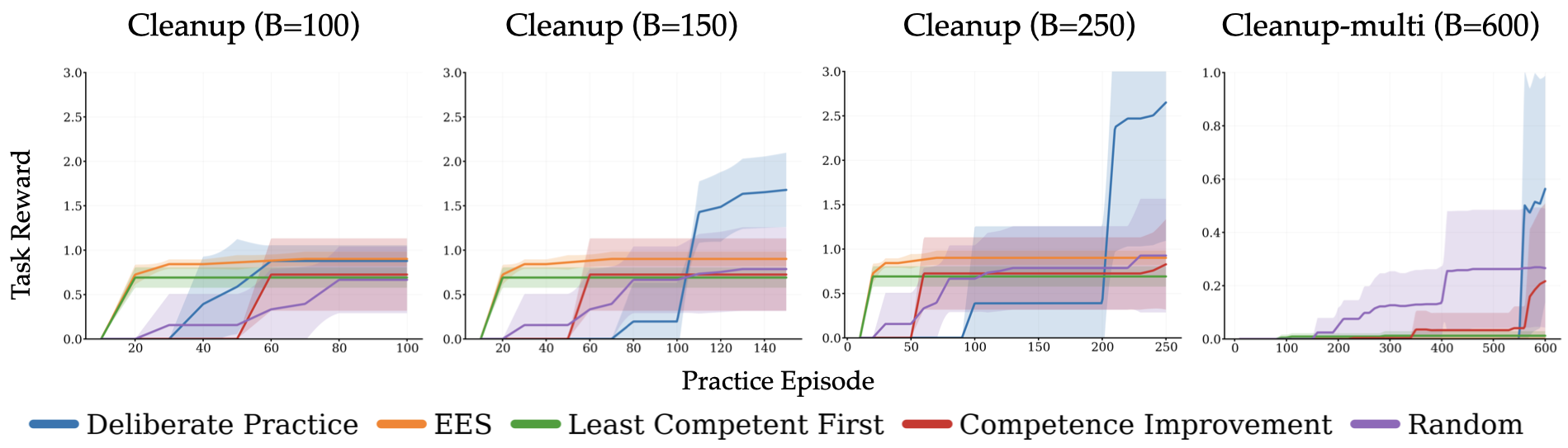}
    \caption{We evaluate all methods on \emph{Cleanup} with budgets (B) of 100 (low), $150$ (medium), and $250$ (high) episodes. While they perform similarly under a low budget, DP is significantly better under medium and high budgets. We report the mean and standard deviation averaged over $5$ seeds.}
    \label{fig:learning_curves}
\end{figure*}

\textbf{DP is budget-aware.} \Figref{fig:sim_cleanup} shows three possible task plans in the \emph{Cleanup} task that place items in the top, middle and bottom drawers to achieve rewards of 1, 2, and 4, respectively.
We compare the effect of different practice budgets on all methods in \Figref{fig:learning_curves}.
DP intelligently chooses which skills to learn based on the available budget: conservatively placing items in the top drawer when the budget is 100 episodes, placing items in the middle drawer under a budget of 150 episodes, and maximizing reward by placing items in the bottom drawer under  a budget of 250 episodes.
By contrast, baselines cannot adapt their behavior to the budget and greedily practice the easiest (and lowest-reward) task plan, irrespective of the budget.

\textbf{DP significantly outperforms prior active learning methods.}
As shown in \Figref{fig:learning_curves}, DP and EES perform similarly under a low budget, but DP significantly outperforms all baselines in medium (150 episodes) and high budget (250 episodes) settings.
The greedy active learning baselines, EES and CI, reason myopically and, therefore, fail to discover higher-reward plans which require practicing skills that do not immediately improve task performance.
This trend continues in the more complex \emph{Cleanup-Multi} task, which requires sequencing 10 skills.
Interestingly, \emph{random} practice outperforms EES in this setting because EES lacks an exploration mechanism when the myopic task improvement measure $\Delta J_{\text{task}}$ is zero for all skills.
By contrast, DP can look ahead and hence does not suffer from this limitation.

\textbf{DP for larger problems.}
We evaluate the scalability of  DP in \emph{Cleanup-Multi}, which has $22$ skills and $5000$ abstract states resulting in a large bilinear program.
Despite the size of this problem, DP consistently computes the optimal budget allocation within a maximum solve time of $6$ minutes.
In larger settings where the solver may not prove global optimality within the allotted time, the same optimization procedure can still return a bounded-suboptimal allocation as Gurobi maintains an incumbent feasible solution and a global bound from convex relaxations of the bilinear constraints, providing an optimality-gap certificate upon early termination.
\begin{wrapfigure}{r}{0.4\columnwidth} %
    \centering
    \includegraphics[width=\linewidth]{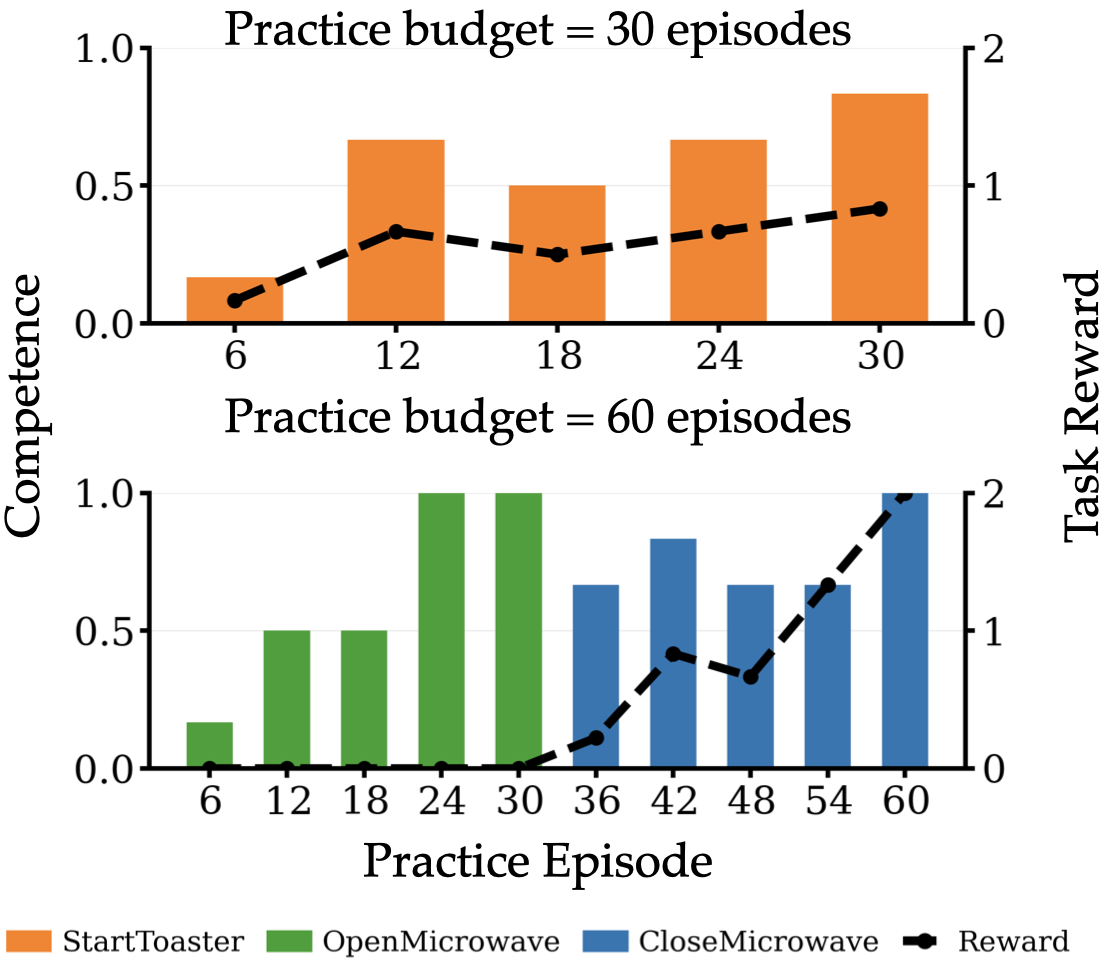}
    \caption{Real-world \emph{Breakfast} task.}
    \label{fig:breakfast}
\end{wrapfigure}

\textbf{Real-world Validation.}
We conduct experiments on a real Franka Panda robot to validate our design choices.
To complete the \emph{Breakfast} task, the robot must learn either to toast bread or microwave oatmeal.
As shown in \Figref{fig:teaser}, toasting provides a reward of 1 and requires practicing only the \texttt{StartToaster} skill, while microwaving oatmeal provides a reward of 2, but requires practicing two skills: \texttt{OpenMicrowave} and \texttt{CloseMicrowave}. The robot is provided with additional \texttt{Pick} and \texttt{Place} skills that can be used to transport items.
We evaluate DP under two practice budgets: 30 episodes and 60 episodes. As shown in ~\Figref{fig:breakfast}, DP correctly chooses the conservative toast-bread plan under the smaller budget, but learns to microwave oatmeal under the larger budget, thereby achieving higher task reward.

\section{Conclusion}
We propose \methodname{}, an active skill learning algorithm that allocates a limited practice budget to many robot skills to provably maximize long-horizon planning performance.
Our algorithm models each skill's competence as a function of the practice budget, computes an optimal budget allocation, and sequentially practices the skills in the environment.
Computing this allocation requires solving a challenging bilevel optimization problem, for which we derive an exact single-level reformulation that can be solved using off-the-shelf solvers.
Simulated and real-robot experiments show that our method is uniquely budget-aware and allows robots to better leverage their practice budgets compared to existing methods to master long-horizon manipulation tasks.

\section{Limitations}
One key assumption of our approach is access to approximate priors over skill competence. If these priors are overly optimistic, the robot may allocate practice to task plans that are actually infeasible within the available budget. This limitation could be addressed by using conservative priors or by explicitly incorporating uncertainty in skill-competence predictions.
A second limitation is that solving the bilinear program to global optimality may become challenging for very large problems. An important direction for future work is therefore to develop bounded-suboptimal optimization strategies.
Finally, we plan to extend our approach to practicing generalizable skills acros multiple tasks on mobile manipulators, moving toward the broader goal of general-purpose robots.

\acknowledgments{
This work was supported by the Office of Naval Research (ONR) under REPRISM MURI N000142412603 and ONR grant N00014-22-1-2592, as well as by the National Science Foundation (NSF) via grant 1955361 and ARL grant W911NF-18-2-0218. Partial funding was also provided by the Robotics and AI Institute.
Sudarshan was supported by the Office of Naval Research (ONR) grant N00014-22-1-259.
}

\clearpage

\bibliography{references}

\clearpage

\appendix
\section{Appendix}
\subsection{Linear Programming Formulation of MDPs}
We leverage the linear programming (LP) formulation of MDPs to instantiate \texttt{SolveMDP} in \Eqref{eq:bilevel_obj}.
Using the primal LP formulation for computing the optimal value function of an MDP, we get the following bilevel problem:
\begin{align}
\begin{split}
    &\max_b \sum_{s\in\Sa} e_s v_s \\
    &\text{subject to} \\
    &\quad \sum_{u \in \mathcal{U}} b_{u} \leq B \\ 
    &\quad v_s \in \argmin_{v} \sum_{s} e_s v_s \\
    &\quad\text{subject to } \\
    &\quad\quad v_s \geq r_s^a + \gamma \sum_{s'}\bar{P}_{ss'}^a(b) v_{s'},\; \forall a, s  \\
    &\quad\quad v_s \text{ unconstrained},\; \forall s
\end{split}
\label{eq:bilevel2}
\end{align}
This is a bilevel program (also called a Stackelberg game), where the inner program (follower) is conditioned on decision variables that are chosen by the outer program (leader).
In particular, the follower's constraints depend on the budget decided by the leader since the transition function $\bar{P}$ is a function of the budget $b$ via the competence improvement function $\bar{P}_{ss'}^a = f_{\text{improve}}(P_{ss'}^a, b)$

\subsection{Single-Level Reformulation} \label{sec:single_level_formulation}
The follower is a $b$-parameterized LP for which Slater's conditions hold. 

\begin{lemma}\label{lemma:slaters}
Suppose $b$ is given. Then the LP $\min_{v} \{e^Tv: v_s \geq r_s^a + \gamma \sum_{s'}\bar{P}_{ss'}^a(b) v_{s'},\; \forall s, a\}$ satisfies Slater's condition. 
\end{lemma}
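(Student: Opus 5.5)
We need to show that there is a strictly feasible point, i.e.\ some $v$ with $v_s > r_s^a + \gamma \sum_{s'}\bar{P}_{ss'}^a(b) v_{s'}$ for all $s,a$. Since all constraints are affine inequalities, Slater's condition for an LP reduces to exhibiting such a point; for polyhedral constraints mere feasibility already suffices for strong duality, but we aim for the strict version. The plan is to take a constant vector $v_s = c$ for all $s$ and choose $c$ large enough.

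First, we record the structural facts about $\bar{P}(b)$ for the fixed $b$. By the competence model in \eqref{eq:competence_improvement_pwl}, each predicted success probability is $\min(1, p_u + \Delta_u b_u)\in[0,1]$. The complementary mass goes to the absorbing failure state, so for every $(s,a)$ the row $\bar{P}_{s\cdot}^a(b)$ is a probability distribution: its entries are nonnegative and $\sum_{s'}\bar{P}_{ss'}^a(b) = 1$. We also need the rewards to be bounded. Since $\Sa_h$ and $\Aa_h$ are finite, set $R_{\max} := \max_{s,a} |r_s^a| < \infty$.

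Second, we plug in $v \equiv c\mathbf{1}$. The constraint becomes $c > r_s^a + \gamma c \sum_{s'}\bar{P}_{ss'}^a(b) = r_s^a + \gamma c$, i.e.\ $(1-\gamma)c > r_s^a$. Choosing any $c > R_{\max}/(1-\gamma)$, for instance $c = (R_{\max}+1)/(1-\gamma)$, makes every constraint hold strictly, using $\gamma<1$. This gives a Slater point.

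Optionally, we can note that the objective $e^Tv$ is bounded below on the feasible set. Any feasible $v$ dominates the optimal value function, by monotonicity of the Bellman operator and its $\gamma$-contraction. So the LP attains a finite optimum, and strong duality with its dual (the occupancy LP) holds; this is what the later reformulation uses.

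The only delicate point is justifying that each row of $\bar{P}(b)$ sums to exactly one (or at most one). This depends on the failure state being an explicit absorbing state whose incoming probability is $1 - f_{\text{improv}}$. We will state this convention explicitly. If rows were merely substochastic, the same $c$ still works provided $c \ge 0$, since then $\gamma c\sum_{s'}\bar{P}_{ss'}^a(b) \le \gamma c$. We will therefore take $c>\max(0,R_{\max}/(1-\gamma))$ to cover both cases.
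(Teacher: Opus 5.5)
Your proof is correct and takes the same route as the paper: exhibit a constant vector $v\equiv c\mathbf{1}$ as a strictly feasible point, using only that each row of $\bar{P}(b)$ sums to one (or at most one, with $c\ge 0$). Your constant $c>\max(0,R_{\max}/(1-\gamma))$ is the right one. The paper's choice $\hat v_s=\alpha/\gamma^2$ with $\alpha=\max_{s,a}r_s^a$ does not work in general: at a maximal-reward pair whose row sums to one it needs $\alpha/\gamma^2>\alpha+\alpha/\gamma$, i.e.\ $\gamma+\gamma^2<1$ when $\alpha>0$ (it fails e.g.\ for $\alpha=1$, $\gamma=0.9$), and it fails outright when $\alpha=0$.
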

\begin{proof}
The satisfaction of Slater's condition is equivalent to showing that there exists $\hat{v}$ such that inequalities are strictly satisfied. Let $\alpha = \max_{s,a} r^a_s$. Then, it is easy to verify $\hat{v}_s = \frac{\alpha}{\gamma^2} \,\forall\, s$ satisfies the inequalities strictly. This proves the claim.
\end{proof}
It is intractable to directly solve the bilevel program.
A standard approach for reformulating such bilevel problems into single-level problems is to replace the follower with its KKT conditions since they are both necessary and sufficient for optimality. 
This results in a single-level optimization with complementarity constraints which can be solved using mixed integer programming. However, this formulation is usually difficult to solve in practice.
Our key observation is that \emph{both the leader and the follower are optimizing the same objective}. Hence, the bilevel problem can be rewritten as a $\max$-$\min$ problem:
\begin{align} 
\begin{split}
&\max_{b \in \mathcal{B}} \min_{v} \{e^Tv: v_s \geq r_s^a + \gamma \sum_{s'}\bar{P}_{ss'}^a(b) v_{s'},\; \forall s, a\} \\
&\text{where,} \\
&\quad \mathcal{B} := \{b_{u} : \sum{b_{u}} \leq B \}
\end{split}
\end{align}
Since the LP satisfies Slater's condition (Lemma~\ref{lemma:slaters}) \emph{strong duality} holds, i.e. the primal and dual are both feasible and attain their respective optimal values which are equal. 
By \textbf{strong duality}, the dual of the follower attains the same optimum as the primal. Hence, we can replace the inner $\min$ problem with its dual $\max$ LP.
\begin{align} 
\begin{split}
&\max_{b \in \mathcal{B}} \max_{\mu} \sum_{\forall s\in\Sa} \sum_{\forall a \in \Aa} r_s^a\mu_s^a \\
&\quad\quad\quad\text{subject to}\\
&\quad\quad\quad\quad\sum_{\forall a \in \Aa} \mu_s^a - \gamma \sum_{s'} \sum_{\forall a } P_{s's}^a \mu_{s'}^a = e_s, \forall s \\
&\text{where, }\mathcal{B} := \{b_{u} : \sum{b_{u}} \leq B \}
\end{split}
\end{align}
The inner problem is feasible for every choice of budget. Hence, we can jointly maximize the budget and dual variables in single $\max$ optimization: 
\begin{align} 
\begin{split}
&\max_{b, \mu} \sum_{\forall s\in\Sa} \sum_{\forall a \in \Aa} r_s^a\mu_s^a \\
&\text{subject to}\\
&\quad\sum_{\forall a \in \Aa} \mu_s^a - \gamma \sum_{s'} \sum_{\forall a } \bar{P}_{s's}^a(b) \mu_{s'}^a = e_s, \forall s \\
&\quad\sum{b_{u}} \leq B
\end{split}
\end{align}
This is a surprisingly simpler single-level program, where the budget constraints have been added to the dual LP formulation of the MDP. Note that it is a non-linear problem even when the competence improvement function is linear.
However, standard optimizers, such as Gurobi~\citep{gurobi} can natively handle such bilinear constraints via the classic McCormick Envelope~\citep{mccormick1976computability}.

\subsection{Experimental Details}
\subsubsection{Skill Practice}
We use CMA-ES to practice robot skills.
CMA-ES is a gradient-free evolutionary-based optimization approach for reward maximization. 
For each skill, we define a reward function and optimize the corresponding interaction policy parameters using CMA-ES~\cite{hansen2001completely}. We use CMA-ES because our objective is evaluated through rollouts and is not differentiable with respect to the policy parameters. CMA-ES is a robust gradient-free optimizer for such continuous black-box objectives.
We implement all learnable skills as Cartesian-space impedance controllers~\cite{hogan1985impedance} parameterized by waypoints and impedance parameters.
The optimizer takes as input the policy parameters to be optimized, together with a Gaussian prior over these parameters, specified by a mean and diagonal covariance matrix.

At each iteration of CMA-ES, we sample $N = 6$ candidate parameter vectors from the current Gaussian search distribution. Each candidate is evaluated by executing the corresponding policy on the real robot in the environment. At the end of each episode, the robot receives a binary reward, and the top half candidates are used to update the mean and covariance of the search distribution. Over successive iterations, the search distribution shifts toward regions of the parameter space that provide higher rewards and improve competence.

\subsection{Baselines}
\begin{enumerate}
    \item \textbf{Estimate, Extrapolate, Situate (EES)}  estimates the expected improvement from practicing each skill independently using $f_{\text{improv}}(\pi, 1)$. For each skill, it uses a task planner to compute the expected improvement in task performance after practicing that skill for one round:
    \[\argmax_{\pi} \Delta J_{\text{task}}(\pi) = \argmax_{\pi} J_{\text{task}}(\bar{P}_{\pi}) - J_{\text{task}}(P),\]
    where $\bar{P}_{\pi}$ is the skill transition matrix after practicing $\pi$ for one round.
    $J_{\text{task}}$ is computed by solving the MDP induced by $P$ using its linear programming formulation.
    The robot greedily practices the skill predicted to yield the largest task-performance improvement in the next round~\citep{kumar2024practice}. 
    \item \textbf{Competence Improvement (CI)} practices the skill with the highest expected competence improvement~\citep{stout2010competence,colas2019curious}, i.e.,
    \[\argmax_{\pi} f_{\text{improv}}(\pi, 1)\]
    CI is biased towards skills that are easy to learn and, hence often wastes time on skills that are not relevant to the task.
    \item \textbf{Least Competent First (LCF)} practices the skill with the lowest current competence, i.e,. 
    \[\argmin_{\pi} p_{\pi}\]
    This encourages the robot to focus on underdeveloped skills and gradually master the full skill set. However, it can waste time on skills that are not relevant to the task or are too hard to learn within the practice budget.
    \item \textbf{Random (R)} unformly randomly samples a reachable skill for practice.
\end{enumerate}

\subsection{Real-Robot Breakfast Domain}
\begin{figure}[t]
    \centering
    \includegraphics[width=1\linewidth]{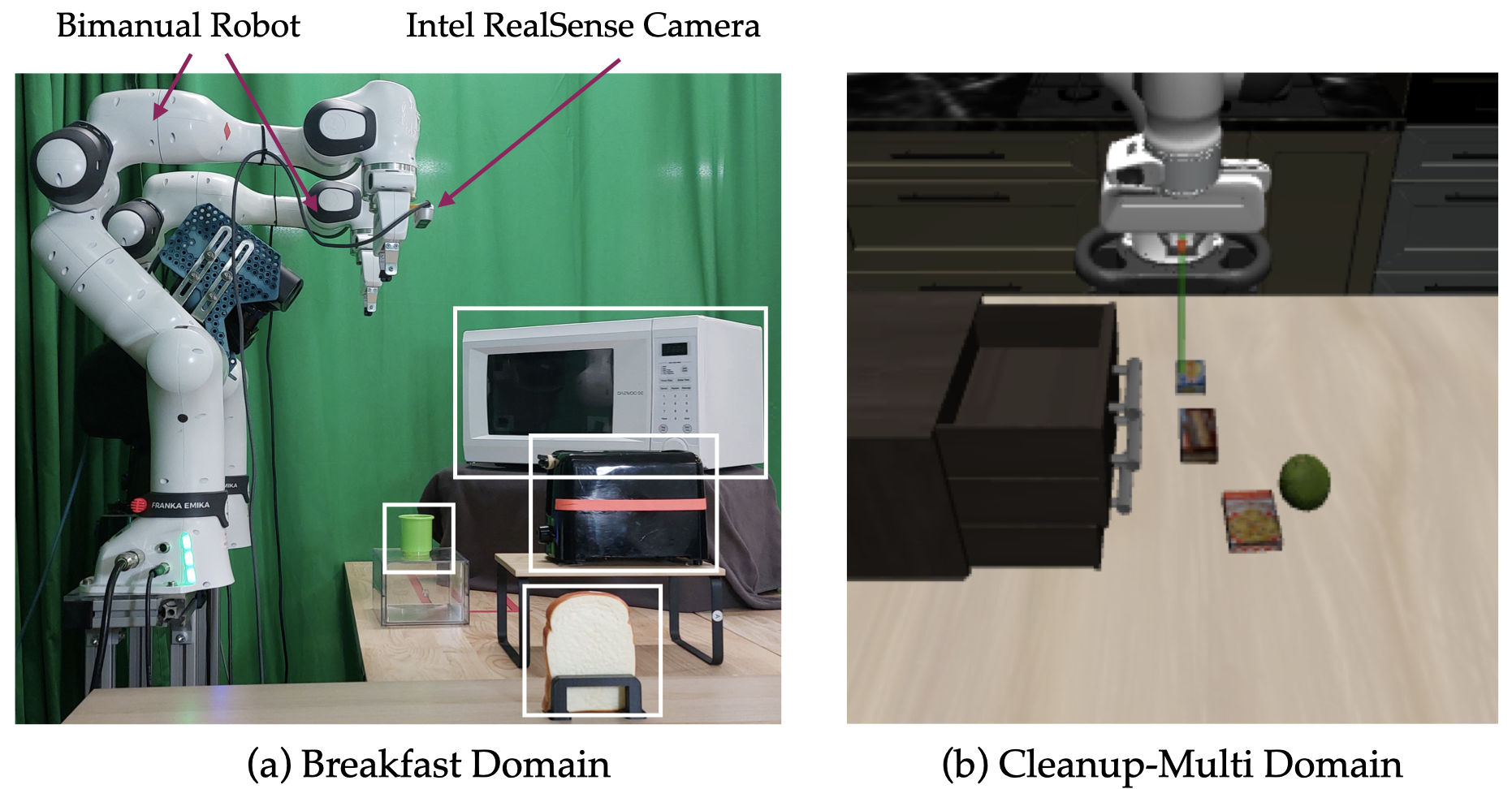}
    \caption{(a) \emph{Breakfast domain.} The robot can either toast bread to achieve reward $1$ or microwave oatmeal to achieve reward $2$. The former requires practicing \texttt{StartToaster} while the latter requires practicing \texttt{OpenMicrowave} and \texttt{CloseMicrowave}. (b) \emph{Cleanup-multi domain.} The achieves reward $1$ for placing all objects in the top drawer, reward $2$ for placing them in the middle drawer and reward $4$ for placing them in the bottom drawer. The lower drawers are blocked by the upper drawers, so the robot must learn \texttt{CloseDrawer} skills to place objects in the lower drawers.}
    \label{fig:breakfast_cleanup_domains}
\end{figure}

\begin{figure}[t]
    \centering
    \includegraphics[width=1\linewidth]{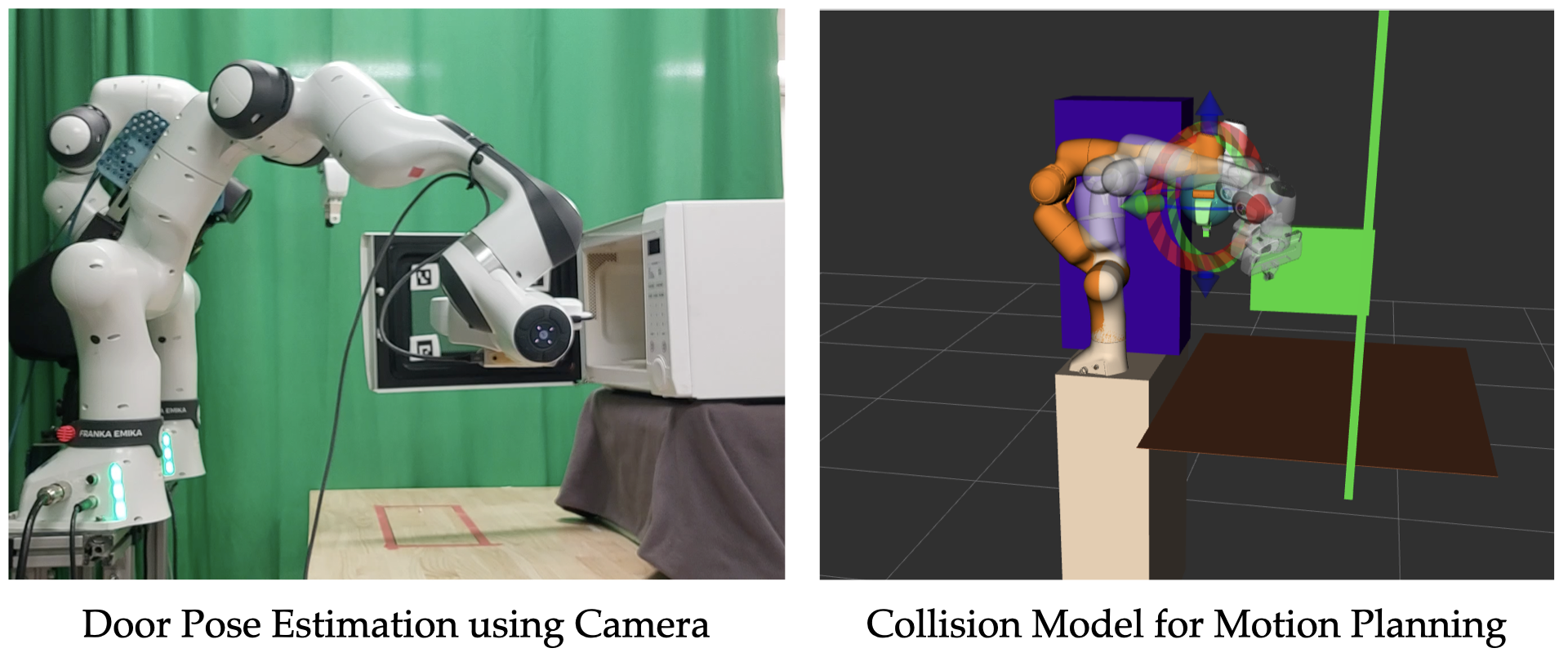}
    \caption{\emph{Breakfast domain.} After pressing the microwave button, the robot estimates the position of the microwave door using a RealSense wrist camera. This information is used to construct a collision model of the scene for motion planning.}
    \label{fig:door_pose}
\end{figure}
As shown in~\Figref{fig:breakfast_cleanup_domains}, the \emph{Breakfast} task is conducted on a bimanual robot with two Franka Panda arms..
The skill library comprises $7$ skills: \texttt{PickBread}, \texttt{PlaceBread}, \texttt{StartToaster}, \texttt{OpenMicrowave}, \texttt{PickBowl}, \texttt{PlaceBowl}, \texttt{CloseMicrowave}.
Our perception system includes a wrist mounted Intel-Realsense D435 camera.
Auroco Markers are mounted on the interior of the microwave door. Once the door is opened, the markers are detected using OpenCV Auroco Detection~\cite{garrido2014automatic,bradski2000opencv}, and depth measurements are used to estimate the dimensions and position of the microwave door. 
The resulting detected pose is used to construct a collision model of the scene, which enables collision-aware motion planning using OMPL~\cite{sucan2012open} through MoveIt~\cite{chitta2012moveit}.
Contact-rich skills \texttt{StartToaster}, \texttt{OpenMicrowave}, \texttt{CloseMicrowave} require forceful interaction with the toaster and microwave and are implemented using Cartesian-space impedance controllers parameterized by relative actions in the end-effector frame and impedance parameters that determine the forces applied by the robot during interaction. These impedance parameters are sensitive to the object properties and task context, and therefore must be learned through practice. To ensure safe exploration, we specify a bounded range of impedance parameters that the robot may explore and use a reward function to encourage the robot to apply the minimum force necessary to solve the task.

\subsubsection{Cleanup Domain}
The \emph{Cleanup} domain requires a Franka Panda robot to place an item in one of three open drawers to clean the table.
\emph{Cleanup-multi} is a harder version with $4$ items on the table (shown in~\Figref{fig:breakfast_cleanup_domains}).
Both of them are implemented in MuJoCo~\citep{todorov2012mujoco} using LIBERO~\citep{liu2023libero} which provides realistic physics simulation.
The robot receives higher reward for placing items in lower drawers. However, these placements are more challenging to learn because they require the robot to first learn how to close the upper drawers.
The \emph{Cleanup} domain has $41$ abstract  states, and \emph{Cleanup-multi} has $5001$ abstract states consisting of the following variables: \texttt{DrawerState} (open, closed) for every drawer, \texttt{ObjectState} (in-hand, on-table, in-top-drawer, in-middle-drawer, in-bottom-drawer) for every object, and a terminal failure state.
The robot is provided with abstract specifications for  $22$  manipulation skills including their preconditions, termination conditions and effects: 
\begin{enumerate}
\item \texttt{OpenDrawer} $\times$ \{top, middle, bottom\} drawer: opens a drawer by grasping the handle and pulling it. The pull action is learned through practice.
\item \texttt{CloseDrawer} $\times$ \{top, middle, bottom\} drawer: closes a drawer by pushing. The push action is learned through practice.
\item \texttt{Pick} $\times\ 4$ items: picks an object with a grasp pose learned through practice.
\item \texttt{Place} $\times\ 4$ items $\times\ 3$ drawers: moves the object to the target drawer and places it. The object release position depends on the object size and the target drawer and is learned from practice.
\end{enumerate}
All skills are implemented as Cartesian-space impedance controllers with a fixed impedance and waypoints defined with respect to the target object.
We use privileged information about object positions from the simulator to prameterize all object-centric skills.
The highest reward task plan requires sequencing $10$ skills to first close the top two drawers and place all items in the lowest drawer.

\end{document}